\newcommand{\ie}{\textit{i}.\textit{e}., }
\newcommand{\eg}{\textit{e}.\textit{g}., }
\newcolumntype{Y}{>{\centering\arraybackslash}X}
\theoremstyle{plain}
\newtheorem{theorem}{Theorem}[section]
\newtheorem{proposition}[theorem]{Proposition}
\theoremstyle{definition}
\theoremstyle{remark}
\title{Diffused Task-Agnostic Milestone Planner}
\author{%
  Mineui Hong, Minjae Kang, and Songhwai Oh \\ 
  Department of Electrical and Computer Engineering and ASRI\\
  Seoul National University\\
  \texttt{mineui.hong@rllab.snu.ac.kr, minjae.kang@rllab.snu.ac.kr, songhwai@snu.ac.kr} \\
}
\begin{document}

\maketitle

\begin{abstract}
Addressing decision-making problems using sequence modeling to predict future trajectories shows promising results in recent years.
In this paper, we take a step further to leverage the sequence predictive method in wider areas such as long-term planning, vision-based control, and multi-task decision-making.
To this end, we propose a method to utilize a diffusion-based generative sequence model to plan a series of \textit{milestones} in a latent space and to have an agent to follow the milestones to accomplish a given task.
The proposed method can learn control-relevant, low-dimensional latent representations of milestones, which makes it possible to efficiently perform long-term planning and vision-based control.
Furthermore, our approach exploits generation flexibility of the diffusion model, which makes it possible to plan diverse trajectories for multi-task decision-making.
We demonstrate the proposed method across offline reinforcement learning (RL) benchmarks and an visual manipulation environment.
The results show that our approach outperforms offline RL methods in solving long-horizon, sparse-reward tasks and multi-task problems,
while also achieving the state-of-the-art performance on the most challenging vision-based manipulation benchmark.
\end{abstract}

\section{Introduction}
Developing a general-purpose agent that can handle a variety of decision-making problems
is a long-standing challenge in the field of artificial intelligence.
In recent years, there has been a growing interest in using offline reinforcement learning (RL) to tackle this problem \citep{yin2021ope, rosete2022tacorl, yoo2022srtd}.
One of the main advantages of offline RL is 
its ability to leverage the data comprised of multiple sub-tasks
by stitching together the relevant transitions to perform a new and more complex task \citep{fu2020d4rl}.
This property of offline RL allows an agent to learn diverse behaviors demonstrated 
by a human or other robots performing similar tasks.
In order to stitch the task-relevant sub-trajectories from the undirected multi-task data,
the majority of the offline RL methods rely on temporal difference learning
which measures the value of the policy by bootstrapping the approximated value functions.
However, despite the successes in the online setting which allows an agent to actively collect data \citep{mnih2013atari, silver2016mastergo, haarnoja2019learning},
the bootstrapping method often causes instability in offline learning.
To address this problem, the offline RL methods require special treatments
such as using batch-constrained policies \cite{fujimoto2019bcq, peng2019awr, kostrikov2022iql}
and penalizing the value of out-of-distribution samples \cite{kumar2020cql}.
Nevertheless, these methods often show unstable results depending on the how hyperparameters are tuned \cite{fujimoto2021minimalist, janner2021offline}.

To overcome the instability and tuning complexity of the offline RL methods,
bootstrapping-free methods that directly predict the future trajectories 
are also presented in recent studies \cite{chen2021decision, janner2021offline, janner2022planning, ajay2022dd}.
These methods, called sequence modeling methods, 
utilize the generative sequence models to predict future states and actions
conditioned on the goal state or the sum of rewards along the trajectory.
In particular, \citet{janner2022planning} and \citet{ajay2022dd} propose
the methods to predict future trajectories using the denosing diffusion probabilistic models (DDPM),
which have strengths in stitching the task relevant sub-trajectories from offline data 
and generating feasible trajectories under various combinations of constraints
(\eg maximizing rewards, approaching to a given goal state).
As a result, DDPM based models are able to flexibly plan trajectories
and show promising results on multi-task learning.
However, the inherent high computational cost of the denoising process 
limits its application to wider areas,
such as real-time control and vision-based tasks.

In this paper, we present a method named
\textbf{D}iffused \textbf{T}ask-\textbf{A}gnostic \textbf{M}ilestone \textbf{P}lanner (\textbf{DTAMP})
which aims to generate a sequence of \textit{milestones} using the diffusion model 
to have an agent to accomplish a given task by following them.
By doing so, DTAMP divides a long-horizon problem
into shorter goal-reaching problems, and address them through goal-conditioned imitation learning
which does not require the unstable bootstrapping method and has minimal hyperparameters to be tuned.
Here, we note that a milestone represents a latent vector encoding a state (or an image), 
which guides an agent to reach the corresponding state.
To learn the meaningful milestone representations,
we propose a method to train an encoder along with a goal-conditioned actor and critic (Figure \ref{fig:overview}, Left).
By doing so, we expect the encoder to extract the control-relevant features
and to embed them into milestones.
Then, we present a method to train the milestone planner to predict milestones for a given goal.
To this end, we first sample a series of intermediate states at intervals from the trajectory approaching the goal, and encode them into milestones to train a diffusion model to reconstruct them.
(Figure \ref{fig:overview}, Right).
Furthermore, we also propose a method to guarantee that the milestone planner predicts the shortest path to the goal, based on the classifier-free diffusion guidance technique \cite{ho2022classifer}.

The main strengths of DTAMP are summarized as follows:
1) The encoder trained through the proposed method encodes control-relevant features into compact latent representations,
which enables efficient planning even if the observations are high-dimensional.
2) The diffusion-based planner enables flexible planning for a variety of tasks 
and takes advantage on multi-task decision-making problems.
3) Using the trained actor as a feedback controller makes an agent robust against the environment stochasticity and 
makes it possible to plan milestones only once at the beginning of an episode
which largely reduces inference time compared to the existing sequence modeling methods.
The proposed method is first evaluated on the goal-oriented environments in the D4RL benchmark \cite{fu2020d4rl}.
Especially, the results on Antmaze environments show that DTAMP 
outperforms offline RL methods in long-horizon tasks without using the bootstrapping method.
Finally, DTAMP is demonstrated on the CALVIN benchmark \cite{mees2022calvin},
which is the most challenging image-based, multi-task environment,
and shows the state-of-the-art performance achieving $1.7$ times higher success rate compared to the second-best method.
The results show that our approach can broaden the area of application of sequence modeling, while being superior to offline RL methods.

\begin{figure}
  \centering
  \includegraphics[width=0.97\textwidth]{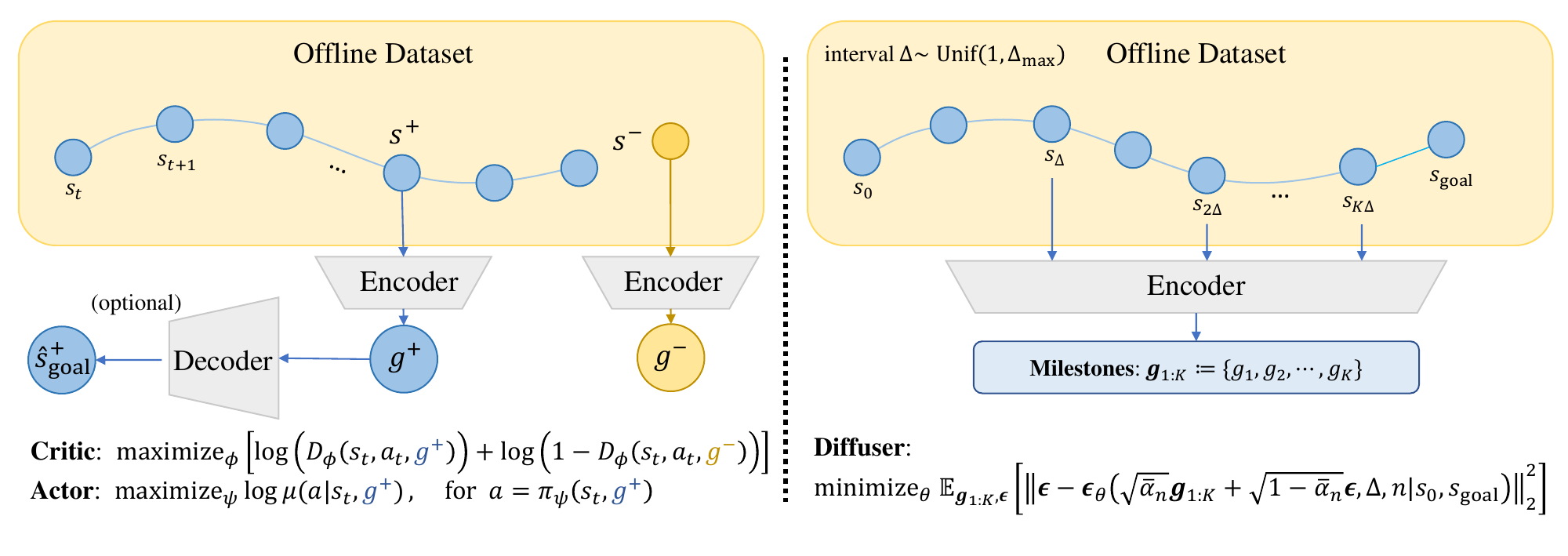}
  \caption{
    \textbf{Training process of DTAMP}.
    (Left) DTAMP learns the latent milestones through a goal-conditioned imitation learning manner.
    (Right) The diffusion model is trained to reconstruct the sequences of milestones 
    sampled from the trajectories in the offline data.
  }
  \label{fig:overview}
\end{figure}

\section{Preliminaries}
\subsection{Goal-conditioned imitation learning}
Multi-task decision-making problems are often treated as a goal-reaching problem,
assuming that each task can be specified by a goal state 
\cite{lynch2019playlmp, tian2021modelbased, rosete2022tacorl}.
Accordingly, we consider the goal-reaching problem in our work and handle the problem through
a goal-conditioned imitation learning manner.
We first formulate the problem with a goal-conditioned Markov decision process defined by
a state space $\mathcal{S}$, an action space $\mathcal{A}$, 
a transition model $p(s'|s,a)$, an initial state distribution $\rho_0$,
and a goal space $\mathcal{G}$.
We assume that there exists a mapping $f:\mathcal{S}\rightarrow\mathcal{G}$,
to determine whether the current state $s\in\mathcal{S}$ has reached a given goal $g\in\mathcal{G}$
by determining whether $f(s)$ equals to $g$.
We further assume that there exists a set of previously collected data $\mathcal{D}$
which consists of the trajectories sampled by a behavior policy $\mu(a|s)$.
Then, we define goal-conditioned imitation learning by
training a goal-conditioned actor $\pi_\psi:\mathcal{S}\times\mathcal{G}\rightarrow\mathcal{A}$
to predict the most likely action to take for the goal 
$g\in\mathcal{G}$:
\begin{equation}\label{def:gcil}
    \text{maximize}_\psi~\log{\mu(a|s,g)},~
    \text{for}~s\in\mathcal{S}~\text{and}~a=\pi_\psi(s,g).
\end{equation}
We note that 
$\mu(a|s,g)
=\frac{p_\mu(g|s,a)\mu(a|s)}{p_\mu(g|s)}$, 
where $p_\mu(g|s,a)$ and $p_\mu(g|s)$ are the finite horizon occupancy measures defined by,
\begin{equation}
\begin{aligned}
    & p_\mu(g|s,a):=\sum_{s^+\in\{s\in\mathcal{S}|f(s)=g\}}p_\mu(s^+|s,a)
    ~\text{and}~
    p_\mu(g|s):=\sum_{a\in\mathcal{A}}p_\mu(g|s,a)\mu(a|s), \\
    & \text{where,}~
    p_\mu(s^+|s,a)
    =\mathbb{E}_{\mu(a_t|s_t),p(s_{t+1}|s_t,a_t)}\left[
        \frac{1}{T}\sum_{t=1}^T P(s_t=s^+|s_0=s,a_0=a)
    \right].
\end{aligned}
\end{equation}
Then the objective in (\ref{def:gcil}) can be reformulated as a loss function defined by,
\begin{equation}\label{def:gci_loss}
    J(s,g;\psi):=-\log{p_\mu(g|s,a)}-\log{\mu(a|s)},
    ~\text{for}~
    a=\pi_\psi(s,g).
\end{equation}
We further note that the presented framework is highly related to the existing offline RL methods.
Intuitively, the first term $\log{p_\mu(g|s,a)}$ in (\ref{def:gci_loss})
is similar to the $Q$-functions used in the studies that address  goal-conditioned RL problems \cite{eysenbach2021replacing, eysenbach2022contrastive},
and the second term $\log{p_\mu(a|s)}$ works as a regularization 
to prevent the policy from taking out-of-distribution actions \cite{peng2019awr, fujimoto2019bcq, kostrikov2022iql}.

\subsection{Denosing diffusion probabilistic models}
A denoising diffusion probabilistic model (DDPM) \cite{ho2020denoising} is a generative model
which learns to reconstruct the data distribution $q(\mathbf{x})$ by iteratively denoising noises sampled from $\mathcal{N}(\mathbf{0}, \mathbf{I})$.
The diffusion models consist of the \textit{diffusion process}
which iteratively corrupts the original data $\mathbf{x}\sim{q(\mathbf{x})}$ 
into a white noise,
and the \textit{denoising process} which reconstructs the data distribution $q(\mathbf{x})$ from the corrupted data.
First, in the diffusion process, a sequence of corrupting data 
$\mathbf{x}^{1:N}=\{\mathbf{x}^1,\mathbf{x}^2,\cdots,\mathbf{x}^N\}$
is sampled from a Markov chain defined by
$q(\mathbf{x}^{n+1}|\mathbf{x}^n)=\mathcal{N}(\sqrt{1-\beta_n}\mathbf{x}^n, \beta_n\mathbf{I})$,
where $\beta_n\in{(0, 1)}$ is a variance schedule
and $n=0,1,\cdots,N$ indicates each diffusion timestep.
Then the \text{denoising process} aims to learn a denoising model 
$p_\theta(\mathbf{x}^{n-1}|\mathbf{x}^n)$,
to sequentially denoise the noisy data $\bm{x}^N\sim\mathcal{N}(\bm{0},\bm{I})$
into the original data $\bm{x}^0$.

The denoising model $p_\theta(\mathbf{x}^{n-1}|\mathbf{x}^n)$ can be trained
by minimizing the following variational bound:
\begin{equation}\label{def:ddpm_elbo}
    \mathbb{E}_q\left[
        D_\text{KL}\big(q(\mathbf{x}_N|\mathbf{x}_0)\|p(\mathbf{x}_N)\big)
        +\sum^{N}_{n=2}{
            D_\text{KL}\big(q(\mathbf{x}^{n-1}|\mathbf{x}^n,\mathbf{x}^0)\|p_\theta(\mathbf{x}^{n-1}|\mathbf{x}^n)\big)
        }
        -\log{p_\theta(\mathbf{x}^0|\mathbf{x}^1)}
    \right].
\end{equation}
However, the variational bound (\ref{def:ddpm_elbo}) is often reformulated to 
a simpler rescaled loss function using a function approximator $\bm{\epsilon}_\theta$
to predict the noise injected into the original data.
\begin{equation}\label{def:ddpm_loss}
\begin{aligned}
    J(\mathbf{x}^0,n;\theta):=\mathbb{E}_{\bm{\epsilon}\sim\mathcal{N}(\mathbf{0},\mathbf{I})}\left[
        \|\bm{\epsilon}-\bm{\epsilon}_\theta(
            \sqrt{\bar{\alpha}_n}\mathbf{x}^0+\sqrt{1-\bar{\alpha}_n}\bm{\epsilon},n
        )\|^2
    \right], 
    \text{where}~\bar{\alpha}_n=\prod_{m=0}^{n}(1-\beta_m).
\end{aligned}
\end{equation}
The surrogate loss function (\ref{def:ddpm_loss}) is known to reduce the variance of the loss and improve the training stability \cite{ho2020denoising}.

\section{Diffused task-agnostic milestone planner}
In this section, we introduce diffused task-agnostic milestone planner (DTAMP).
We first present the goal-conditioned imitation learning method 
for simultaneously training the encoder, the actor, and the critic functions.
Secondly, the method to train the diffusion model to plan milestones is presented.
In addition, we also propose a diffusion guidance method 
to let the milestone planner predict the shortest path to reach a given goal.
Finally, we present a sequential decision-making algorithm 
combining the goal-conditioned actor and the milestone planner.
Note that more detailed explanation about our implementation of DTAMP 
is presented in Appendix \ref{appendix:D}.

\subsection{Goal-conditioned imitation learning from offline data}
We first introduce our approach to train 
the goal-conditioned actor $\pi_\psi$,
using a discriminative function $D_\phi:\mathcal{S}\times\mathcal{A}\times\mathcal{G}\rightarrow{[0,1]}$
as the critic function to estimate the log-likelihood $\log{p_\mu(g|s,a)}$ in (\ref{def:gci_loss}).
Here, we assume that a goal is designated by a single state $s_\text{goal}\in\mathcal{S}$ which can be a raw sensory observation or an image.
In order to extract the underlying goal representation $g\in\mathcal{G}$
from the given goal state,
we also train the encoder $f_{\omega}:\mathcal{S}\rightarrow\mathcal{G}$ alongside the actor and critic, where $\mathcal{G}$ is a latent goal space.
Note that the more discussion about how the encoder is trained through imitation learning and visualization of the learned goal space are presented in Appendix \ref{appendix:G}.

To estimate the log-likelihood $\log{p_\mu(g|s,a)}$ in (\ref{def:gci_loss}),
we first train the discriminative critic $D_\phi$ to minimize the following cross-entropy loss:
\begin{equation}\label{def:critic_loss}
\begin{aligned}
    J_\text{critic}(s&,a,s^+,s^-;\phi,\omega)
    :=-\log{D_\phi(s,a,f_\omega(s^+))}
     -\log{(1-D_\phi(s,a,f_\omega(s^-)))}, \\
    \text{where,}~&
    s\sim{p_\mu(s)},~a\sim\mu(a|s),~s^+\sim{p_\mu(s^+|s,a)},~s^-\sim{p_\mu(s^-)}.
\end{aligned}
\end{equation}
We note that $p_\mu(s)$ is a prior distribution over the state space
computed by $\sum_{s_0\in\mathcal{S}}p_\mu(s|s_0)\rho_0(s_0)$,
which is estimated by a uniform distribution over the all data points of the offline data.
Intuitively, for a given state-action pair $(s,a)$,
the positive sample $s^+$ is sampled from the future states comes after $(s,a)$,
and the negative sample $s^-$ is sampled uniformly across the entire data distribution.
As a result, we expect the critic to be trained to distinguish between states 
that are more likely to come after the current state-action pair. 
This intuition can be formally formulated with the following proposition.
\begin{proposition}\label{prop:optimal_critic}
The optimal critic function $D_\phi^*$
that minimizes the cross-entropy loss (\ref{def:critic_loss})
satisfies the following equation for a given goal $g=f_\omega(s_\text{goal})$.
\begin{equation}
    D^*_\phi(s,a,g)
    =\frac{p_\mu(g|s,a)}{p_\mu(g|s,a)
    +p_\mu(g)}.
\end{equation}
\end{proposition}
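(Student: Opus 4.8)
The plan is to recognize the critic loss (\ref{def:critic_loss}) as a standard binary cross-entropy (logistic) objective and to compute its pointwise minimizer. First I would fix the encoder $f_\omega$ and view $g = f_\omega(s_\text{goal})$ as a fixed label, so that the only free object is the function $D_\phi(s,a,g) \in [0,1]$. Writing the expectation over the sampling scheme in (\ref{def:critic_loss}) explicitly, the loss becomes
\begin{equation}
\mathbb{E}_{s\sim p_\mu(s),\,a\sim\mu(a|s)}\Big[
-\,p_\mu(g|s,a)\,\log D_\phi(s,a,g)
\;-\; p_\mu(g)\,\log\big(1-D_\phi(s,a,g)\big)
\Big],
\end{equation}
where I have used that drawing $s^+ \sim p_\mu(s^+|s,a)$ and then evaluating the label $f_\omega(s^+) = g$ happens with total weight $p_\mu(g|s,a)$, while drawing the negative $s^-\sim p_\mu(s^-)$ and evaluating $f_\omega(s^-)=g$ happens with weight $p_\mu(g)$ (this is exactly the marginal $p_\mu(g) = \sum_{s}p_\mu(g|s)\rho$-type quantity obtained by integrating the definitions in the Preliminaries).

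Next, since the integrand depends on $D_\phi$ only through its value at the point $(s,a)$, I would minimize pointwise: for fixed weights $w^+ := p_\mu(g|s,a)$ and $w^- := p_\mu(g)$, the scalar map $d \mapsto -w^+\log d - w^-\log(1-d)$ on $(0,1)$ is strictly convex, and setting its derivative $-w^+/d + w^-/(1-d)$ to zero yields the unique minimizer $d^\star = w^+/(w^+ + w^-)$. Substituting back gives
\begin{equation}
D^*_\phi(s,a,g) \;=\; \frac{p_\mu(g|s,a)}{p_\mu(g|s,a) + p_\mu(g)},
\end{equation}
as claimed. I would also remark that this pointwise argument implicitly assumes the function class for $D_\phi$ is expressive enough to realize this optimum (the usual nonparametric idealization), and that $p_\mu(s) > 0$ on the relevant support so the pointwise minimization is valid almost everywhere.

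The only genuinely delicate step is the bookkeeping in the first displayed equation: one must be careful that the ``positive'' term really contributes weight $p_\mu(g|s,a)$ and the ``negative'' term weight $p_\mu(g)$ once the label $f_\omega(\cdot) = g$ event is accounted for, i.e. that the sum over $s^+$ with $f_\omega(s^+)=g$ of $p_\mu(s^+|s,a)$ equals $p_\mu(g|s,a)$ (this matches the occupancy-measure definitions given earlier, modulo writing $p_\mu(g) = \sum_s p_\mu(g|s)p_\mu(s)/\text{(something)}$ consistently — I would state precisely which marginalization convention makes the negative term equal $p_\mu(g)$). Everything after that is the routine convex-optimization computation above. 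I expect the write-up to be short: one line to rewrite the loss, one line for the pointwise minimizer, one line to conclude.
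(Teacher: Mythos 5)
Your proposal is correct and follows essentially the same route as the paper's proof: rewriting the sampled positive/negative terms as a weighted sum with weights $p_\mu(g|s,a)$ and $p_\mu(g)$ over goals, then solving the pointwise first-order condition $-w^+/d + w^-/(1-d)=0$ to get $d^\star = w^+/(w^++w^-)$. Your added remarks on strict convexity and the nonparametric realizability of the optimum are minor refinements the paper leaves implicit, not a different argument.
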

\begin{proof}
    Proof of Proposition \hyperref[prop:optimal_critic]{3.1} is presented in Appendix \ref{appendix:A}.
\end{proof}
Using Proposition \hyperref[prop:optimal_critic]{3.1},
we estimate ${p_\mu(g|s,a)}$ for a given goal $g\in\mathcal{G}$
with the trained critic function.
\begin{equation}
    {p_\mu(g|s,a)}
    =p_\mu(g){\frac{D_\phi(s,a,g)}{1-D_\phi(s,a,g)}}.
\end{equation}
Now, the remaining part is to estimate $\log{p_\mu(a|s)}$ in (\ref{def:gci_loss}) to train the policy $\pi_\psi$.
However, although some previous studies try to estimate the marginal likelihood
using conditional generative models \cite{fujimoto2019bcq} or 
density estimation methods \cite{wu2019behavior, kostrikov2021fisher},
they require additional computational cost
while not showing the significant improvement in performance \cite{fujimoto2021minimalist}.
Therefore, we circumvent this issue by replacing $\log{p_\mu(a|s)}$ in the objective (\ref{def:gci_loss}) into a simple behavior-cloning regularization term.
This regularization has been shown to be an effective way to stabilize offline policy learning requiring minimal hyperparameter tuning \cite{fujimoto2021minimalist}.
Consequently, the training objective for the policy function is defined as the following loss function:
\begin{equation}\label{def:actor_loss}
\begin{aligned}
    & J_\text{actor}(s,\tilde{a},s^+;\psi,\omega)
    :=-\lambda\log{\frac{D_\phi(s,\tilde{a},f_\omega(s^+))}
    {1-D_\phi(s,\tilde{a},f_\omega(s^+))}}
    +\|\tilde{a}-a\|^2, \\
    & \text{where,}~~
    s\sim{p_\mu(s)},~a\sim\mu(a|s),~
    s^+\sim{p_\mu(s^+|s,a)},~
    \tilde{a}=\pi_\psi(s,f_\omega(s^+)).
\end{aligned}
\end{equation}
where $\lambda$ is a hyperparameter which balances the weight between
the maximizing log-likelihood $\log{p_\mu(f_\omega(s^+)|s,\tilde{a})}$ and the regularization.

Please note that training the actor and the critic through the proposed goal-conditioned imitation learning method does not require temporal difference learning (or bootstrapping).
Meanwhile, as the actor is trained only for the positive goals,
it has its limitation in inferring a suitable action for a negative goal that has never been reached from the current state in the data collection process.
To overcome this limitation, we further propose a diffusion-based milestone planner
which predicts milestones to guide an agent to reach the distant negative goal state, in the next section.

\subsection{Diffusion model as a milestone planner}
We now introduce our method to train the diffusion model as a milestone planner
to guide the actor to reach the distant goal state.
Intuitively, this can be done by letting the diffusion model to predict intermediate sub-goals that should pass.
To this end, 
we first sample from offline data a pair of initial and goal state $(s_0, s_\text{goal})$ 
and $K$-intermediate states $\{s_{\triangle},\cdots,s_{K\triangle}\}$
at interval $\triangle$, where $s_{(K+1)\triangle}=s_\text{goal}$.
Then the sampled intermediate states are encoded into a series of latent milestones
$\bm{g}_{1:K}:=\{g_1,g_2,\cdots,g_K\}$, such that $g_k=f_\omega(s_{k\triangle})$ for $k=1,2,\cdots,K$.
Then, the diffusion model is trained to reconstruct $\bm{g}_{1:K}$ by minimizing the following loss function:
\begin{equation}\label{def:diffusion_loss}
    J_\text{diffusion}(\bm{g}_{1:K},n,s_0,s_\text{goal};\theta):=\mathbb{E}_{\bm{\epsilon}\sim\mathcal{N}(\mathbf{0},\mathbf{I})}\left[
        \|\bm{\epsilon}-\bm{\epsilon}_\theta(
            \sqrt{\bar{\alpha}_n}\bm{g}_{1:K}+\sqrt{1-\bar{\alpha}_n}\bm{\epsilon},n|s_0,s_\text{goal}
        )\|^2
    \right].
\end{equation}
Then, the trained milestone planner can predict milestones 
for a given pair of initial and goal state $(s_0,s_\text{goal})$
by sequentially denoising a series of noisy milestones $\bm{g}^N_{1:K}\sim\mathcal{N}(\bm{0},\bm{I})$:
\begin{equation}\label{def:inpainting}
\begin{aligned}
    & \bm{g}_{1:K}^{n-1}\sim
    \mathcal{N}\left(
    \sqrt{\frac{\bar{\alpha}_{n-1}}{\bar{\alpha}_n}}(\bm{g}_{1:K}^n-\sqrt{1-\bar{\alpha}_n}\hat{\bm{\epsilon}}_n),
    \sqrt{1-\bar{\alpha}_{n-1}}\bm{I}\right), \\
    & \text{for}~
    \hat{\bm{\epsilon}}_n
    =\bm{\epsilon}_\theta\left(\bm{g}_{1:K}^n,n|s_0,s_\text{goal}\right)
    ~\text{and}~
    n=N,N-1,\cdots,1.
\end{aligned}
\end{equation}

\subsection{Minimum temporal distance diffusion guidance}
The denoising process (\ref{def:inpainting}) makes it possible to generate milestones for the given goal state,
but does not guarantee that the planned trajectory is the shortest path.
To address this issue we make an additional modification using classifier-free diffusion guidance \cite{ho2022classifer}.
Since we are interested in minimizing the length of trajectories represented by milestones, minimizing the temporal distances between the pairs of successive milestones is a natural choice.
Therefore, we modify the diffusion model to be conditioned on the temporal interval $\triangle$
of the sampled sequence of intermediate states $\{s_\triangle,\cdots,s_{K\triangle}\}$.
The loss function (\ref{def:ddpm_loss}) is then extended to the following unconditional loss function $J_\text{uncond}$ and conditional loss function $J_\text{cond}$:
\begin{equation}\label{def:cond_loss}
\begin{aligned}
    & J_\text{uncond}(\bm{g}_{1:K},n;\theta)=\mathbb{E}_{\bm{\epsilon}\sim\mathcal{N}(\mathbf{0},\mathbf{I})}\left[
        \|\bm{\epsilon}-\bm{\epsilon}_\theta(
            \sqrt{\bar{\alpha}_n}\bm{g}_{1:K}+\sqrt{1-\bar{\alpha}_n}\bm{\epsilon},\varnothing,n
        )\|^2
    \right], \\
    & J_\text{cond}(\bm{g}_{1:K},\triangle,n;\theta)=\mathbb{E}_{\bm{\epsilon}\sim\mathcal{N}(\mathbf{0},\mathbf{I})}\left[
        \|\bm{\epsilon}-\bm{\epsilon}_\theta(
            \sqrt{\bar{\alpha}_n}\bm{g}_{1:K}+\sqrt{1-\bar{\alpha}_n}\bm{\epsilon},\triangle,n
        )\|^2
    \right].
\end{aligned}
\end{equation}
Then, the shortest path planning is done with the guided prediction of 
$\hat{\bm{\epsilon}}_n$ in (\ref{def:inpainting}):
\begin{equation}\label{def:td_guidance}
    \hat{\bm{\epsilon}}_n=
    \bm{\epsilon}_\theta(\bm{g}_{0:K}^n,\varnothing,n)
    + \beta\left[\bm{\epsilon}_\theta(\bm{g}_{0:K}^n,\triangle_{\text{target}},n)
     - \bm{\epsilon}_\theta(\bm{g}_{0:K}^n,\varnothing,n)\right],
\end{equation}
where $\triangle_\text{target}$ denotes the target temporal distance between milestones 
which is set to be relatively smaller than the maximum value $\triangle_\text{max}$,
and $\beta$ is a scalar value that determines the weight of the guidance.
Please note that the conditions on the pair of initial and goal state $(s_0,s_\text{goal})$ are
omitted in (\ref{def:cond_loss}) and (\ref{def:td_guidance}) for the legibility.

\subsection{Sequential decision-making using DTAMP}
\begin{algorithm}[t]
    \caption{Sequential decision-making using DTAMP}\label{alg:decision_making}
	\begin{algorithmic}[1]
        \State Predict milestones $\bm{g}_{1:K}\sim{p_\theta(\bm{g}^0_{1:K}|\bm{g}^N_{1:K},\triangle_\text{target},s_0,s_\text{goal})}$,
        through denoising process (\ref{def:inpainting}).
        \State Set the index $k$ for targeting a milestone: $k\leftarrow{1}$.
        \State Set internal timestep $\tau\leftarrow{0}$.
        \For {each environment timestep $t$}
            \State Take an environment step with an action $a=\pi_\psi(s_t,g_k)$.
            \State Increase internal timestep $\tau\leftarrow\tau+1$.
            \If{$\|f_\omega(s_{t+1})-g_k\|_2^2<\delta$ or $\tau>\tau_\text{lim}$}
                \State Switch the target milestone to the next: $k\leftarrow{k+1}$.
                \State Reset internal timestep $\tau\leftarrow{0}$.
            \EndIf
		\EndFor
	\end{algorithmic} 
\end{algorithm}
Using the learned milestone planner, we can predict a series of milestones for a given goal state,
and let the agent to follow the milestones using the learned actor function.
We note that the goal-conditioned actor works as a feedback controller 
which can adapt to stochastic transitions.
This property allows to perform the time-consuming denoising process only once at the beginning of an episode.
In contrast, the other sequence modeling methods \cite{janner2021offline,janner2022planning,ajay2022dd}
have to predict future trajectories at every timestep, which largely increases inference time. 
The remaining important question is how to decide whether the agent has reached the current milestone and move on to the next one.
We address this issue by setting a threshold $\delta$
and make the switch if the distance between the current state and the targeted milestone 
measured in the latent space is less then the threshold.
In addition, to make the agent recover from failure to reach a milestone,
we also set an internal time limit $\tau_\text{lim}$ 
and allow the agent to move on to the next milestone when it fails to reach the current one within the time limit.
Algorithm \ref{alg:decision_making} summarizes our method for sequential decision-making.

\section{Experiments}
In this section we empirically verify that DTAMP is able to handle long-horizon, multi-task decision-making problems.
We first evaluate the proposed method on the goal-oriented environments in the D4RL benchmark \cite{fu2020d4rl},
and show that our approach outperforms the other methods on the long-horizon, sparse-reward tasks.
Then, we evaluate DTAMP on the most challenging CALVIN benchmark \cite{mees2022calvin},
and show that DTAMP achieves the state-of-the-art performance.
Finally, we also present an ablation study to investigate how DTAMP's performance is influenced by the minimum time distance guidance method (\ref{def:td_guidance}) and the various other design choices of DTAMP.
Please note that the detailed experiment settings are presented in Appendix \ref{appendix:B}, and
additional experiments on the D4RL locomotion tasks and a stochastic environment are presented in Appendix \ref{appendix:H}.

\subsection{Offline reinforcement learning}
\begin{table}[t!]
  \caption{The table shows averaged scores on D4RL benchmarks.
  DTAMP was trained with three different random seeds, 
  and evaluated with 100 rollouts for each random seed.
  The source of the baseline performance is presented in Appendix \ref{appendix:C}.
  }
  \label{table:d4rl}
  \centering
  \begin{tabular}{lllllllll}
    \toprule
    \multicolumn{2}{c}{Environment}    
    & CQL  & IQL  & ContRL & DT    & TT(+IQL)             & DD   & \bf{DTAMP} \\
    \midrule
    \parbox[t]{2mm}{\multirow{5}{*}{\rotatebox[origin=c]{90}{antmaze}}} &
    medium-play                      
    & 61.2 & 71.2 & 72.6   & 0.0   & 0.0 (81.9)  & 0.0  & \textbf{93.3}$\pm$0.94 \\
    & medium-diverse                   
    & 53.7 & 70.0 & 71.5   & 0.0   & 0.0 (85.7)  & 0.0  & \textbf{88.7}$\pm$3.86 \\
    & large-play                       
    & 15.8 & 39.6 & 48.6   & 0.0   & 0.0 (64.8)  & 0.0  & \textbf{80.0}$\pm$3.27 \\
    & large-diverse                    
    & 14.9 & 47.5 & 54.1   & 0.0   & 0.0 (65.7)  & 0.0  & \textbf{78.0}$\pm$8.83 \\
    \cmidrule{2-9}
    & Average                          
    & 36.4 & 57.1 & 61.7   & 0.0   & 0.0 (74.5)  & 0.0  & \textbf{85.0}$\pm$4.84 \\
    \midrule
    \parbox[t]{2mm}{\multirow{3}{*}{\rotatebox[origin=c]{90}{kitchen}}} &
      mixed                            
    & 52.4 & 51.0 & -      & -     & -                    & 65.0 & \textbf{74.4}$\pm$1.39 \\
    & partial                          
    & 50.1 & 46.3 & -      & -     & -                    & 57.0 & \textbf{63.4}$\pm$8.80 \\
    \cmidrule{2-9}
    & Average                          
    & 51.3 & 48.7 & -      & -     & -                    & 61.0 & \textbf{68.9}$\pm$6.30 \\
    \bottomrule
  \end{tabular}
\end{table}
\textbf{D4RL benchmarks}~~
We first evaluate DTAMP on the goal-oriented environments in the D4RL benchmark (Antmaze and Kitchen environments)
to verify that DTAMP can handle long-horizon tasks without bootstrapping the value function.
These tasks are known to be especially challenging to be solved 
as the useful reward signals only appear when the agent reaches the goal states 
that are far apart from starting states.
The proposed method is compared against
offline RL baselines 
(\textbf{CQL}: conservative q-learning \cite{kumar2020cql}, 
\textbf{IQL}: implicit q-learning \cite{kostrikov2022iql},
\textbf{ContRL}: contrastive reinforcement learning \cite{eysenbach2022contrastive}),
and the sequence modeling methods 
(\textbf{DT}: decision transformer \cite{chen2021decision},
\textbf{TT}: trajectory transformer \cite{janner2021offline}, 
\textbf{DD}: decision diffuser \cite{ajay2022dd}).
In addition, we also compare DTAMP to a variant of TT 
which utilizes a value function trained by IQL for using the beam search algorithm
(\textbf{TT+IQL}), as proposed by \citet{janner2021offline}.

The results in Table \ref{table:d4rl} show that DTAMP outperforms the baseline methods on the every environment.
Especially, the other sequence modeling methods (DT, TT, and DD)
that do not use temporal difference learning fail to 
show any meaningful result on the Antmaze environments.
The reason for their poor performance is that the goal state of the Antmaze environment is too far from the initial state\footnote{about $500$ timesteps apart} to accurately predict the whole trajectory to the goal.
Meanwhile, DTAMP can efficiently plan a long-horizon trajectory by generating temporally sparse milestones.

\begin{table}[t!]
\caption{The table shows averaged scores on the multi-goal setting of Antmaze environments.
The values in the parenthesises indicate performance difference between single-goal and multi-goal settings in percentage.
}
\label{table:d4rl_multigoal}
  \begin{tabular}{lllllll}
    \toprule
    \multicolumn{3}{c}{Environment} 
    & IQL+HER & ContRL & TT+IQL     & \bf{DTAMP} \\
    \midrule
    \parbox[t]{0.1mm}{\multirow{5}{*}{\rotatebox[origin=c]{90}{antmaze}}} &
    \parbox[t]{2mm}{\multirow{5}{*}{\rotatebox[origin=c]{90}{multi-goal}}} &
    medium-play                     
    & 34.7 (-51.3\%) & 58.7 (-19.1\%) & 68.9 (-15.9\%) & \textbf{89.3}$\pm$3.86 (-4.3\%) \\
    & & medium-diverse              
    & 50.7 (-27.6\%) & 57.3 (-19.9\%) & 75.6 (-11.8\%) & \textbf{76.7}$\pm$4.50 (-13.5\%) \\
    & & large-play                  
    & 46.0 (+16.2\%) & 24.7 (-49.2\%) & 28.9 (-55.4\%) & \textbf{62.0}$\pm$2.94 (-22.5\%) \\
    & & large-diverse               
    & 42.0 (-11.6\%) & 26.7 (-50.6\%) & 23.3 (-64.5\%) & \textbf{53.3}$\pm$9.67 (-31.7\%)\\
    \cmidrule{3-7}
    & & Average                     
    & 43.4 (-24.0\%) & 41.9 (-32.2\%) & 49.2 (-34.0\%) & \textbf{70.3}$\pm$5.86 (-17.3\%) \\
    \bottomrule
  \end{tabular}
\end{table}
\textbf{Multi-goal Antmaze experiment}~~ In addition, we demonstrate DTAMP on Antmaze environments with multi-goal setting
to show that DTAMP can flexibly plan the milestones according to the varying goals.
In this experiment, a goal for each rollout is randomly sampled 
from three different predefined target positions including the 
goals that were not reached from the initial state during the data-collection
(see Appendix \ref{appendix:B} for more details).
We compare DTAMP against the three baseline methods
(\textbf{IQL} \cite{kostrikov2022iql}, 
\textbf{ContRL} \cite{eysenbach2022contrastive}, 
and \textbf{TT+IQL} \cite{janner2021offline}),
which achieve high scores in the single-goal setting.
We note that DTAMP and ContRL already learn goal-conditioned actors
and do not require to be modified for the multi-goal setting.
Meanwhile, we modify IQL and TT+IQL 
by adding hindsight experience replay (HER) \cite{andrychowicz2017hindsight}.
In the multi-goal setting, DTAMP shows the least performance degradation compared to the other methods (Table \ref{table:d4rl_multigoal}).
Especially, TT+IQL shows the largest performance degradation compared to the
results evaluated on the single-goal setting.
This result indicates that the transformer architecture \cite{vaswani2017attention}
used in TT struggles to predict the trajectory to reach a new goal that was not seen in the training time,
while DTAMP adaptively plan the milestones by stitching the relevant sub-trajectories.

\begin{figure}
    \centering
  \subfigure[Antmaze environments]{\includegraphics[width=0.49\textwidth]{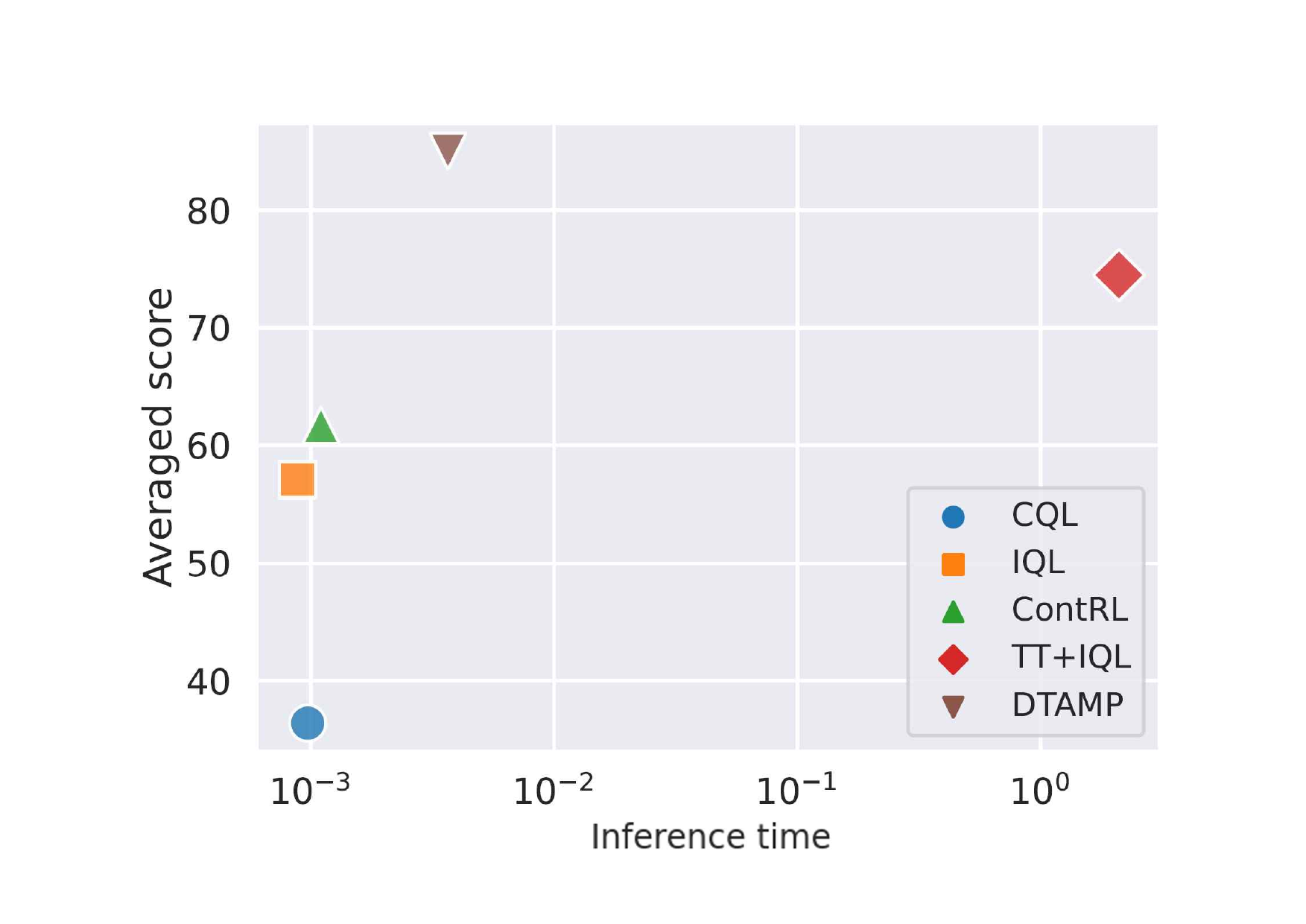}}
  \subfigure[Kitchen environments]{\includegraphics[width=0.49\textwidth]{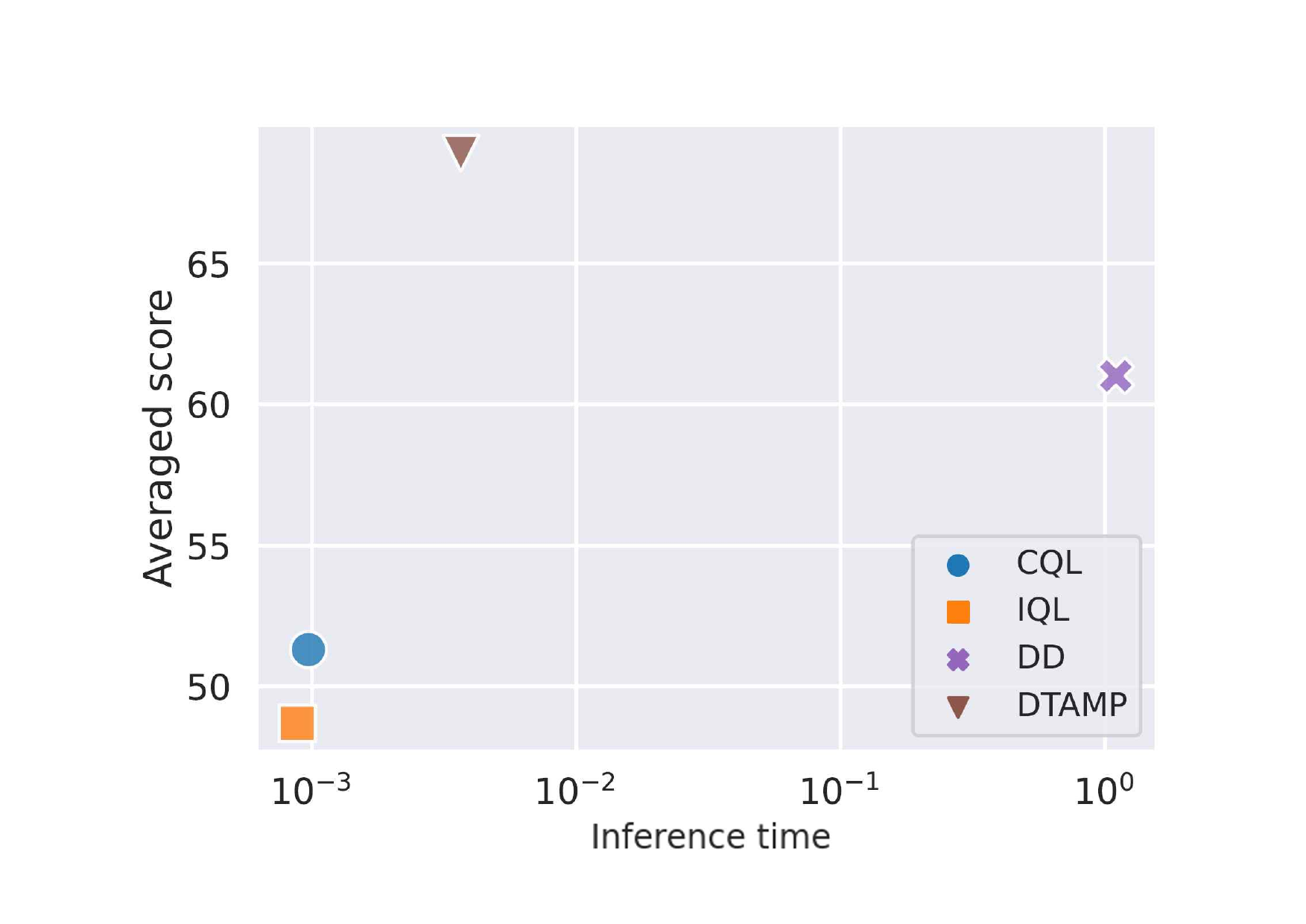}}
  \caption{The plots compare the inference times and the average scores of the different algorithms on the D4RL benchmarks.
  The $x$-axis indicates the inference time in logarithmic scale.
  The inference times are measured using an NVIDIA GeForce 3060 Ti graphics card.}
  \label{fig:time-return}
\end{figure}
\textbf{Comparison of inference time}~~ In order to evaluate the computational efficiency of the proposed method,
we further compare DTAMP against the other methods in terms of inference time
it takes to predict a single step of action.
The comparative result in Figure \ref{fig:time-return} shows that 
our approach largely reduces the computational cost compared to the other sequence modeling methods, while achieving the highest average score.
In particular, the inference time of DTAMP is about 1,000 times faster than TT
and about 300 times faster than DD.
It is because using the actor as a feedback controller 
makes the agent robust on the stochasticity of the environment,
and allows to perform the time-consuming sequence generation process
only once at the beginning of an episode.
In contrast, the other sequence modeling methods have to predict the future trajectories for every timestep which results in high computational cost.

\subsection{Image-based planning}
\begin{figure}[t]
  \centering
  \includegraphics[width=\textwidth]{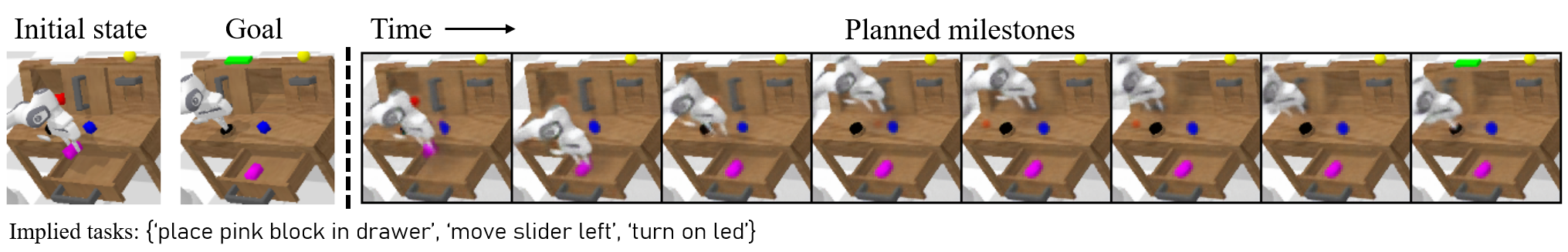}
  \caption{
    Images on the left show a pair of initial state and goal state,
    which implies three different tasks.
    The series of images on the right represents the milestones planned by DTAMP.
    The image of each milestone is reconstructed using the trained decoder.
  }
  \label{fig:visualization}
\end{figure}
We now evaluate the proposed method on the CALVIN benchmark \cite{mees2022calvin},
which is the most challenging multi-task visual manipulation task.
In this experiment, the observations and goals are designated by images,
and reaching a goal image may requires the agent to sequentially accomplish multiple tasks.
Figure \ref{fig:visualization} shows an example that three different tasks
(\texttt{`place pink block in drawer'}, 
\texttt{`move slider left'}, 
and \texttt{`turn on led'})
are implied by a single goal image.
We utilize the dataset provided by \citet{mees2022calvin},
which consists of the trajectories performing a variety of manipulation tasks
(\eg picking up a block, opening the drawer, pushing the button to turn on the led, etc.)
collected by a human teleoperating the robotic arm in the simulation.
To investigate how the performance of DTAMP and other baselines varies with the difficulty of the task,
we group the pairs of initial and goal states into three groups according to the number of tasks they imply, and evaluate the models on each group separately.

We note that two modifications are added to DTAMP in this experiment,
as the CALVIN environment has the partially observable characteristic.
First, we additionally train a decoder to reconstruct image observations from the encoded latent milestone vectors in order to let the encoder capture the essential visual features.
Second, we utilize skill-based policy as done by 
\citet{lynch2019playlmp} and \citet{rosete2022tacorl},
to let the agent predict actions based on the current and previous observations.
More detailed discussion of how modifications affect the performance of DTAMP will be presented in the ablation study.

In this experiment, our method is compared against
an offline RL baseline combined with goal relabelling
(\textbf{CQL+HER}: conservative q-learning \cite{kumar2020cql}
with hindsight experience replay \cite{andrychowicz2017hindsight})
and hierarchical RL methods 
(\textbf{LMP}: latent motor plans \cite{lynch2019playlmp},
\textbf{RIL}: relay imitation learning \cite{gupta2019relay},
\textbf{TACO-RL}: task-agnostic offline reinforcement learning \cite{rosete2022tacorl}).
Furthermore, we also evaluate \textbf{DTAMP+Replanning} which allows the agent to re-plan the rest of milestones at every time it reaches a milestone.
The results in Table \ref{table:calvin} shows that our approach achieves the state-of-the-art performance
in the CALVIN environment against existing methods by a significant margin.
We also would like note that only DTAMP and DTAMP+Replanning show the meaningful success rates on the hardest setting
that a goal image implies three different tasks in a row.
The results indicate that our approach enables planning on image-based environments,
while also outperforming the other hierarchical RL methods 
that utilize skill-based policy (LMP and TACO-RL) or generate sub-goals (RIL).
Furthermore, we also see that the performance of DTAMP can be further improved by allowing agents to re-plan milestones during an episode.

\begin{table}[t] 
\caption{The table shows averaged success rates on the CALVIN benchmark in percentage.
DTAMP is trained with three random seeds, and the each model is evaluated with 1000 rollouts.}
\label{table:calvin}
\begin{tabular}{ccccccc}
    \toprule
    Number of tasks & CQL  & LMP  & RIL  & TACO-RL & \textbf{DTAMP} & \textbf{DTAMP} \\
                    & +HER &      &      &         &                & \textbf{+Replanning}\\
    \midrule
    1               & 58.0 & 61.4 & 66.2 & 80.3    & 82.4$\pm$2.87  & \textbf{87.4}$\pm$1.43 \\
    2               & 2.4  & 2.7  & 13.3 & 27.0    & 52.4$\pm$1.64  & \textbf{62.6}$\pm$3.57 \\
    3               & 0.0  & 0.0  & 0.0  & 0.4     & 29.7$\pm$6.80  & \textbf{38.1}$\pm$5.78 \\
    \midrule
    Average         & 20.1 & 21.3 & 26.5 & 35.9    & 54.8$\pm$4.37  & \textbf{62.7}$\pm$3.60 \\
    \bottomrule
  \end{tabular}
\end{table}


\subsection{Ablation study}
\begin{figure}
\begin{floatrow}
\ffigbox{%
  \includegraphics[width=0.45\textwidth]{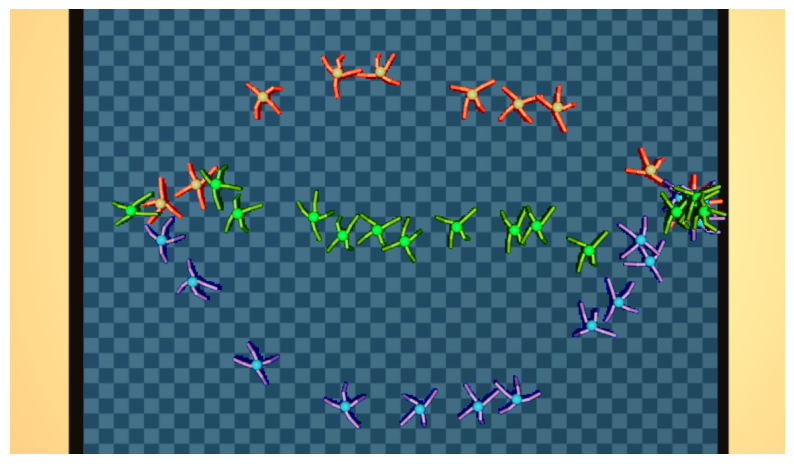}%
}
{
  \caption{The figure visualizes the milestones planned under different conditions
  (\textcolor{red}{red}: unconditioned,
  \textcolor{green}{green}: $\triangle_\text{target}/\triangle_\text{max}=0.1$,
  \textcolor{blue}{blue}: $\triangle_\text{target}/\triangle_\text{max}=1.0$)}
}\label{fig:ablation}
\capbtabbox{%
  \begin{tabular}{ccc} \hline
  ${\triangle_\text{target}}/{\triangle_\text{max}}$
  & Success rate  & Timesteps   \\ 
  & (\%)          & to goal \\
  \hline
  $0.1$                     & 97.7         & 178             \\
                            & $\pm$1.77    & $\pm$3.46 
  \\
  $1.0$                     & 96.3         & 310           \\
                            & $\pm$1.89    & $\pm$10.9        
  \\ 
  Uncond                    & 96.7         & 249           \\
                            & $\pm$1.70    & $\pm$18.8 
  \\ \hline
  \\
  \end{tabular}
}{%
  \caption{The table shows averaged success rates
  and the number of timesteps taken to reach the goal under different conditions.}%
}\label{table:ablation}
\end{floatrow}
\end{figure}
\textbf{Minimum temporal distance guidance} We present an ablation study to verify that the minimum temporal distance guidance (\ref{def:td_guidance})
enables DTAMP to plan the shortest path for a given goal.
We demonstrate DTAMP on a simplified open Antmaze environment
with varying target temporal distance $\triangle_\text{target}$.
In order to train DTAMP,
one million steps of offline data was collected by a pretrained locomotion policy.
Then, the unconditional version (\ref{def:diffusion_loss})
and conditional version (\ref{def:cond_loss}) of DTAMP were trained using the collected data.
The trajectories planned by the unconditional model and conditional model are show in Figure \hyperref[fig:ablation]{4}.
As expected, 
the guidance method enables DTAMP to plan trajectories of varying lengths according to the given target temporal distances,
while the unconditional model tends to plan sub-optimal trajectories.
In addition,
Table \hyperref[table:ablation]{4} shows that the guidance method is able to reduce the number of timesteps required to reach the goal,
while it does not affect the success rate.

\begin{table}[t]
\caption{The results of ablation study on the CALVIN benchmark.
The values represent success rates in percentage.}
\label{table:ablation_calvin}
\begin{tabular}{cccccc}
    \toprule
    Number of tasks & DTAMP    & DTAMP          & DTAMP        & DTAMP \\
                    &          & $-$Recon. Loss & $-$GCIL Loss & $-$ LMP \\
    \midrule
    1               & 82.4          & 80.2           & \textbf{86.7}& 55.9 \\
    2               & \textbf{52.4} & 49.4           & 43.3         & 24.3 \\
    3               & \textbf{29.7} & 23.0           & 17.2         & 15.2 \\
    \midrule
    Average         & \textbf{54.8} & 50.9           & 49.1         & 31.8 \\
    \bottomrule
  \end{tabular}
\end{table}
\textbf{Ablation study in image-based planning}
In order to apply DTAMP on the image-based planning problems, we made two modifications on DTAMP: 1) Training a decoder to reconstruct images. 2) Utilizing skill-based policy.
In this experiment we further investigate how each modification affects to the performance of DTAMP.
To this end, we evaluate three variations of DTAMP in the CALVIN benchmark 
(\textbf{DTAMP$-$Recon. Loss}: the encoder is trained without reconstruction loss, \textbf{DTAMP$-$GCIL Loss}: the encoder is trained with only reconstruction loss without goal-conditioned imitation learning loss,
\textbf{DTAMP$-$LMP}: the agent does not utilize skill-based policy).
The results shown in Table \ref{table:ablation_calvin} indicate that using both reconstruction loss and GCIL loss when training the encoder improves DTAMP's performance the most.
We also find that ablating skill-based policy causes a large performance degradation.


\section{Related Work}
\textbf{Diffusion models}
\cite{song2019generative, ho2020denoising, song2021denoising}
have received active attention recently.
There exists a number of applications of diffusion model,
such as photo-realistic image generation 
\cite{dhariwal2021diffusion, rombach2022highresolution},
text-guided image generation
\cite{saharia2022photorealistic, nichol2022glide, ramesh2022hierarchical},
and video generation \cite{ho2022video, harvey2022flexible, he2023latentvideo, zhou2022magicvideo}.
While the majority of work related to diffusion model is focused on image generation and manipulation,
there is also growing interest in applying diffusion model on robotic tasks or reinforcement learning.
The methods to plan a trajectory with diffusion models to perform a given task
are proposed by \citet{janner2022planning} and \citet{ajay2022dd}.
In the meantime, \citet{wang2022diffpolicy} and \citet{chi2023diffusionpolicy}
present the methods to utilize diffusion models for predicting actions.
Furthermore, \citet{urain2023se3} proposes a method to train a cost function alongside with a diffusion model,
and to utilize it to select a proper grasping angle.
It is worth mentioning that our approach is the first work 
that utilizes diffusion model to plan trajectories in a learnable latent space
and handle visuomotor tasks, to the best of our knowledge.

\textbf{Offline reinforcement learning}
has also been widely studied to improve data efficiency of reinforcement learning (RL) algorithms \cite{mnih2013atari, silver2016mastergo, hasselt2016doubleq, lillicrap2016ddpg, haarnoja2019learning}.
The most of RL methods utilize temporal difference method to estimate the value of current action considering over the further timesteps.
However, bootstrapping the estimated values often causes overestimating the value of out-of-distribution actions \cite{fujimoto2018td3, fujimoto2019bcq},
which makes it challenging to apply RL on fixed offline data.
To tackle this issue, \citet{fujimoto2019bcq} proposes batch constrained policy learning 
which restricts the policy to output the actions within the data.
There also various methods to stabilize the offline RL are proposed,
by learning policy through advantage weighted regression \cite{peng2019awr},
augmenting q-learning objective \cite{kumar2020cql, kostrikov2022iql},
or model-based reinforcement learning \cite{kidambi2020morel}.
In addition, applying offline RL on multi-task RL problems 
is showing promising results in recent years \cite{rosete2022tacorl},
as it is able to extract various skills from offline data
and compose them to accomplish a new challenging task.
Accordingly, our work also considers the offline setting 
that leverages the previously collected data,
and mainly focus on designing a method to stitch the separated trajectories in the offline data
to accomplich a given task.

\section{Limitations}
\textbf{Cannot solve a new task not included in dataset}.
In this work, we considered multi-task decision-making problems 
assuming that the offline data contains the segments of trajectories for performing the tasks 
we are interested in.
However, there may also be cases where an agent needs to find new skills 
to solve new tasks that were not included in the previously collected data. 
Exploring the environment using the pretrained model to find new skills
and continually learning to perform more challenging tasks
would be an interesting topic for the future work.\\
\textbf{Efficient re-planning strategy is needed}.
Although the proposed method leverages faster inference time compared to other generative model-based planning methods by allowing the agent to plan milestones only once, our empirical results show that having the agent re-plan the trajectories during episodes can improve performance.
Furthermore, in order to apply DTAMP on the problems that their environments are constantly changing, the agent is required to re-plan the trajectories according to the changed environment.
In this context, an efficient re-planning strategy to determine when and how to modify the planned path must be further studied to utilize DTAMP for a wider area of robotic tasks.

\section{Conclusion}
This paper presents a method to plan a sequence of latent sub-goals,
which is named as milestones, for a given target state to be reached.
The proposed method trains 
an encoder to extract milestone representations from observations, 
and an actor to predict actions to follow the milestones,
through a goal-conditioned imitation learning manner.
Our method also utilizes a denosing diffusion model to generate 
an array of milestones conditioned on a target state.
Furthermore, the minimum temporal distance diffusion guidance method
is also presented in this paper,
to make the proposed method plan the shortest path to reach the target state.
Experimental results support that the proposed method makes it possible
to plan the milestones adapting to various tasks,
and to manage long-term tasks without using the unstable bootstrapping method.

\section*{Acknowledgements}
This work was supported by Institute of Information \& Communications Technology Planning \& Evaluation (IITP) grant funded by the Korea government (MSIT) (No. 2019-0-01190, [SW Star Lab] Robot Learning: Efficient, Safe, and Socially-Acceptable Machine Learning).

\bibliography{reference}

\clearpage
\appendix

\section*{Appendix}
\section{Proof of Proposition 1}\label{appendix:A}
\textbf{Recap}~~
We consider a goal-conditioned Markov decision process defined by
a state space $\mathcal{S}$,
an action space $\mathcal{A}$,
a transition model $p(s'|s,a)$,
and a goal space $\mathcal{G}$.
We further consider an encoder $f_\omega:\mathcal{S}\rightarrow\mathcal{G}$
which maps a goal state $s_\text{goal}\in\mathcal{S}$ into a goal vector $g\in\mathcal{G}$.
Goal-conditioned imitation learning is defined by training a goal-conditioned actor
$\pi_\psi:\mathcal{S}\times\mathcal{G}\rightarrow\mathcal{A}$ 
to predict the most likely action that a behavior policy $\mu(a|s)$
will take for a given goal $g\in\mathcal{G}$:
\begin{equation}\label{def:gcil_1}
\text{maximize}_\psi~\log{\mu(a|s,g)},~\text{for}~s\in\mathcal{S}~\text{and}~a=\pi_\psi(s,g).
\end{equation}
Using Bayes' rule,
$\mu(a|s,g)
=\frac{p_\mu(g|s,a)\mu(a|s)}{p_\mu(g|s)}$, 
where $p_\mu(g|s,a)$ and $p_\mu(g|s)$ are the finite horizon occupancy measures,
the objective in (\ref{def:gcil_1}) can be reformulated as:
\begin{equation}\label{def:gci_loss}
    J(s,g;\psi):=-\log{p_\mu(g|s,a)}-\log{\mu(a|s)},
    ~\text{for}~
    a=\pi_\psi(s,g).
\end{equation}
In order to estimate the log-likelihood $\log{p_\mu(g|s,a)}$,
we train a discriminative critic function 
$D_\phi:\mathcal{S}\times\mathcal{A}\times\mathcal{G}\rightarrow[0,1]$
to distinguish between states 
that are more likely to come after the current state-action pair,
by minimizing the following loss function:
\begin{equation}\label{def:critic_loss}
\begin{aligned}
    J_\text{critic}(s&,a,s^+,s^-;\phi,\omega)
    :=-\log{D_\phi(s,a,f_\omega(s^+))}
     -\log{(1-D_\phi(s,a,f_\omega(s^-)))}, \\
    \text{where,}~&
    s\sim{p_\mu(s)},~a\sim\mu(a|s),~s^+\sim{p_\mu(s^+|s,a)},~s^-\sim{p_\mu(s^-)}.
`\end{aligned}
\end{equation}
We now aim to prove the following proposition:
\begin{proposition}\label{prop:optimal_critic}
The optimal critic function $D_\phi^*$
that minimizes the cross-entropy loss (\ref{def:critic_loss})
satisfies the following equation for a given goal goal $g=f_\omega(s_\text{goal})$.
\begin{equation}
    D^*_\phi(s,a,g)
    =\frac{p_\mu(g|s,a)}{p_\mu(g|s,a)
    +p_\mu(g)}.
\end{equation}
\end{proposition}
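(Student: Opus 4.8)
The plan is to treat the cross-entropy loss (\ref{def:critic_loss}) as a pointwise optimization problem over the output of $D_\phi$ at each triple $(s,a,g)$, which is the standard technique for deriving optimal discriminators in GAN-style objectives. First I would make the sampling distributions explicit: the expected loss is
\begin{equation}
\mathbb{E}_{s\sim p_\mu(s),\,a\sim\mu(a|s)}\Big[
-\mathbb{E}_{s^+\sim p_\mu(s^+|s,a)}\log D_\phi(s,a,f_\omega(s^+))
-\mathbb{E}_{s^-\sim p_\mu(s^-)}\log\big(1-D_\phi(s,a,f_\omega(s^-))\big)\Big].
\end{equation}
Pushing the encoder $f_\omega$ forward, the positive sample contributes mass $p_\mu(g|s,a)$ to the goal $g=f_\omega(s^+)$ and the negative sample contributes mass $p_\mu(g)$ to $g=f_\omega(s^-)$ (using the marginal $p_\mu(g)=\sum_{s^-}\mathbf{1}[f_\omega(s^-)=g]\,p_\mu(s^-)$). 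So, for fixed $(s,a)$, the part of the integrand depending on the value $d:=D_\phi(s,a,g)$ is $-p_\mu(g|s,a)\log d - p_\mu(g)\log(1-d)$, summed (or integrated) over $g$.

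Next I would minimize $h(d) = -p_\mu(g|s,a)\log d - p_\mu(g)\log(1-d)$ over $d\in(0,1)$ for each $g$ independently. Differentiating, $h'(d) = -p_\mu(g|s,a)/d + p_\mu(g)/(1-d)$, and setting $h'(d)=0$ yields $d^* = p_\mu(g|s,a)/\big(p_\mu(g|s,a)+p_\mu(g)\big)$; the second derivative is positive, so this is the unique minimizer (and it lies in $(0,1)$ whenever both masses are positive). Since this minimizer is attained pointwise and the function class is assumed expressive enough, the optimal $D_\phi^*$ achieves exactly this value, which is the claimed identity. I would also remark on the edge case where $p_\mu(g|s,a)=p_\mu(g)=0$, where any value is vacuously optimal since that $g$ carries no weight in the loss.

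The main obstacle is bookkeeping rather than any deep difficulty: one must be careful that the same state-action pair $(s,a)$ is used in both the positive and negative terms (so the critic is conditioned on the same context), and that the change of variables from states $s^+,s^-$ to goals $g$ via $f_\omega$ is handled correctly — in particular the negative term genuinely produces the marginal $p_\mu(g)$ rather than $p_\mu(g|s)$ because $s^-$ is drawn from $p_\mu(s^-)$ independently of $(s,a)$. Once the objective is written as a sum over $g$ of independent one-dimensional convex problems, the rest is the routine calculus argument above.
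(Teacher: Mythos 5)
Your proposal is correct and follows essentially the same route as the paper's proof: rewriting the expected cross-entropy as a sum over goals $g=f_\omega(\cdot)$ weighted by $p_\mu(g|s,a)$ and $p_\mu(g)$, then solving the resulting pointwise one-dimensional problem by setting the derivative $-p_\mu(g|s,a)/d + p_\mu(g)/(1-d)$ to zero. Your additional remarks on convexity and the zero-mass edge case are minor refinements the paper omits, but the argument is the same.
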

\begin{proof}
We can rewrite the loss function (\ref{def:critic_loss}) as,
\begin{equation}\label{def:critic_loss_2}
    J(s,a;\phi,\omega)=-\sum_{\tilde{g}\in\tilde{\mathcal{G}}_\omega}\left[
        p_\mu(\tilde{g}|s,a)\log{D_\phi(s,a,\tilde{g})}
        + p_\mu(\tilde{g})\log{(1-D_\phi(s,a,\tilde{g}))}
    \right],
\end{equation}
where, $\tilde{\mathcal{G}}_\omega=\{f_\omega(s)|s\in\mathcal{S}\}$.
Then, by taking derivative of (\ref{def:critic_loss_2}) with respect to $D_\phi(s,a,g)$,
\begin{eqnarray}\label{derivative}
\frac{\partial J(s,a;\phi,\omega)}{\partial D_\omega(s,a,g)}=
    -\frac{p_\mu(g|s,a)}{D_\omega(s,a,g)}+\frac{p_\mu(g)}{1-D_\omega(s,a,g)}.
\end{eqnarray}
At $D_\phi(s,a,g)=D^*_\phi(s,a,g)$,
the derivative in (\ref{derivative}) becomes $0$.
Therefore,
\begin{eqnarray*}
D^*_\phi(s,a,g)=\frac{p_\mu(g|s,a)}{p_\mu(g|s,a)+p_\mu(g)}.
\end{eqnarray*}
\end{proof}

\section{Experiment setting}\label{appendix:B}
\begin{figure}
    \centering
    \subfigure[Antmaze-medium]{\includegraphics[height=0.4\textwidth]{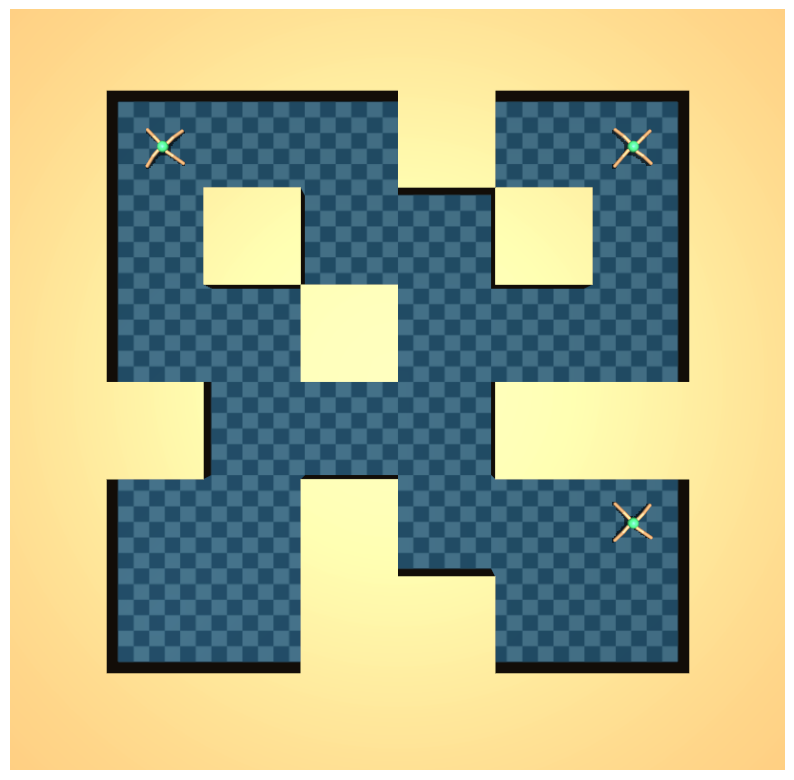}}
    \subfigure[Antmaze-large]{
    \includegraphics[height=0.4\textwidth]{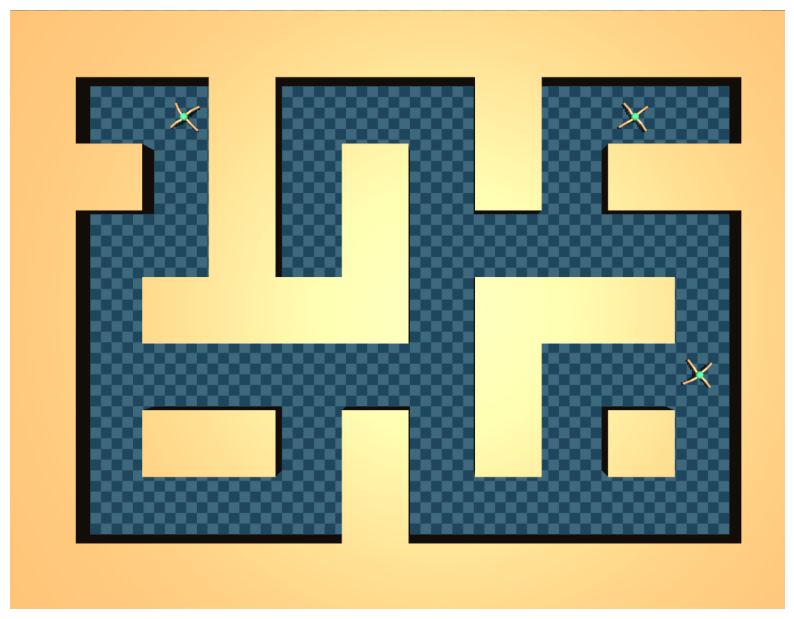}}
    \caption{Each position marked by an ant illustrates a predefined target position.}
    \label{fig:multigoal}
\end{figure}
\textbf{Multi-goal Antmaze experiments}~~
To evaluate DTAMP and baseline methods on multi-goal setting,
we predefined three different target positions as shown in Figure \ref{fig:multigoal}.
During the evaluation, a goal was randomly sampled from the three target positions 
for each rollout.
The models were trained using the same data 
(\texttt{antmaze-medium-play},
\texttt{antmaze-medium-diverse}, 
\texttt{antmaze-large-play},
\texttt{antmaze-large-diverse})
that was used for single-goal Antmaze experiments.

\textbf{CALVIN experiments}~~
We used the same data split and tasks used in \citet{rosete2022tacorl}
to train and evaluate the model in the CALVIN experiment.
We set time horizon of a rollout to 300 timesteps when a goal image implies one or two tasks,
and 450 timesteps when a goal image implies three tasks.

\section{Source of baseline performance}\label{appendix:C}
\textbf{D4RL experiments}~~
The performances of CQL \cite{kumar2020cql},
IQL \cite{kostrikov2022iql},
and ContRL \cite{eysenbach2022contrastive}
are taken from their corresponding papers.
The performance of DT \cite{chen2021decision} in Antmaze environments 
is taken from \citet{janner2021offline}.
The performance of DD \cite{ajay2022dd} in Kitchen environment is taken 
from its corresponding paper.
To evaluate DD on Antmaze environments, 
DD is trained with three different random seeds using 
the author's code\footnote{https://github.com/anuragajay/decision-diffuser/tree/main/code},
and each trained model is evaluated with 30 rollouts (90 rollouts in total)
for each task.
While the performance of TT+IQL is presented in \citet{janner2021offline},
it is inaccurate as only 15 rollouts were used for evaluation.
For more accurate evaluation, TT and IQL are trained with three different random seeds using the code\footnote{The author thankfully provided a private github repository} provided by the author,
and each model is evaluated with 30 rollouts (90 rollouts in total) for each task.

\textbf{Multi-goal Antmaze experiments}~~
IQL+HER is implemented based on the author's code\footnote{https://github.com/ikostrikov/implicit\_q\_learning}
by concatenating two-dimensional goal position on the original state vector.
We train IQL+HER with three different random seeds 
and evaluate each model with 150 rollouts.
As ContRL already learns a goal-conditioned policy,
ContRL is trained with three different random seeds using the author's code\footnote{https://github.com/google-research/google-research/tree/master/contrastive\_rl} without modification.
As also TT does not require a goal-specific training,
we evaluate TT+IQL using the model trained for single-goal experiments
while using the value functions trained by IQL+HER.

\textbf{CALVIN experiments}~~
The performances of CQL+HER, LMP \cite{lynch2019playlmp},
RIL \cite{gupta2019relay}, and TACO-RL \cite{rosete2022tacorl}
are taken from \citet{rosete2022tacorl}
for the cases that one or two tasks are implied by a goal image.
For the case that three tasks are implied by a goal image,
we trained each model using the code\footnote{https://github.com/ErickRosete/tacorl} provided by \citet{rosete2022tacorl},
and evaluated each model with 1,000 rollouts.

\section{Implementation details}\label{appendix:D}
\textbf{Encoder}~~
The encoder consists of two neural networks one for each actor and critic.
Each neural network has two hidden fully-connected layers with a size of 512,
and an output layer.
In addition, we normalized each output of the two neural networks,
to provide consistent signal-to-noise ratio to the diffusion model during training process.

\textbf{Actor and critic}~~
Each actor and critic has two hidden fully-connected layers with a size of 512,
and an output layer.
We train five independent critic networks and use the smallest of the values predicted by critics to train actors,
as done in \citet{eysenbach2022contrastive}.

\textbf{Diffusion model}~~
The diffusion model for planning milestones is implemented based on the code provided by \citet{ajay2022dd},
which consists of a temporal U-Net architecture with six residual 1-D convolutional blocks.
To let the diffusion model predict milestones $g_{1:K}$ 
based on an initial state $s_0$ and goal state $s_\text{goal}$,
we fixed $g_0=f_\omega(s_0)$ and $g_{K+1}=f_\omega(s_\text{goal})$
during the denoising process, as done in \citet{janner2022planning}.

\textbf{Image preprocessing}~~
Our image preprocessing method used for CALVIN experiment consists of three steps:
1) Resize RGB images size of $200\times200\times3$ into $128\times128\times3$.
2) Perform stochastic image shifts of 0-6 pixels.
3) Perform color jitter transform augmentation
with a contrast of 0.1, a brightness of 0.1 and hue of 0.02.
4) Normalize value of each pixel to let it falls between $-1$ and $1$.

\textbf{Visual perception}~~
For CALVIN experiments, 
we utilize a visual perception network
consists of three convolutional layers and a spatial softmax layer.
It is the same as the perception network used in \citet{lynch2019playlmp} and \citet{rosete2022tacorl}.

\textbf{Skill encoder and decoder}~~
The skill encoder and decoder are trained using the code provided by \citet{rosete2022tacorl},
and have the same architecture as LMP and TACO-RL.

\section{Training details}
\textbf{D4RL experiments}~~
We train the encoder, goal-conditioned actor, critic and diffusion model simultaneously using a unified loss function:
$J_\text{unified}:=J_\text{actor}+J_\text{critic}+\alpha{J_\text{diffusion}}$,
where the coefficient $\alpha$ is fixed to $0.001$ for all the experiments.
We train the models for 2.5M training steps for Antmaze environments
and 1.0M steps for Kitchen environments.
We utilize NVIDIA Geforce RTX 3060 Ti graphics card for training our models,
taking approximately 14 hours per 1.0M training steps.
We use Adam optimizer \cite{kingma2015adam}, with a learning rate of 0.0001.

\textbf{CALVIN experiments}~~
We train our model for 1.0M training steps using the same graphics card used for D4RL experiments, taking approximately 40 hours per 1.0M training steps.
We use the same optimizer and learning rate used in D4RL experiments.

\section{Hyperparameter setting}
In this section, we describe hyperparameter details.
\begin{itemize}
    \item We set dimension of goal space $\mathcal{G}$ to 16 for Antmaze environments, and 32 for Kitchen and CALVIN environments.
    \item We use the number of milestones $K=30$ with maximum interval $\triangle_\text{max}=32$ for Antmaze environments,
    and $K=14$ with maximum interval $\triangle_\text{max}=16$ for Kitchen and CALVIN environments.
    \item We set $\lambda$ in Equation (9) in the main paper to be adaptive to scale of estimated log-likelihood $\log{p(g|s,a)}$,
    by setting it as $\lambda=\frac{\tilde{\lambda}}{\frac{1}{|\mathcal{B}|}\sum_{(s,a,g)\in\mathcal{B}}|\log{p(g|s,a)}|}$
    as done in \citet{fujimoto2021minimalist}, where $\mathcal{B}$ denotes a mini-batch.
    We use $\tilde{\lambda}$ of 2.5 for Antmaze environments, and 0.05 for Kitchen and CALVIN environments.
    \item We use diffusion timestep $N$ of 300, diffusion guidance coefficient $\beta$ of 0.5,
    and target temporal distance $\triangle_\text{target}$ of $0.5\triangle_\text{max}$ for the all experiments.
    \item We use threshold $\delta$ of 0.1 to determine whether the agent has reached the targeted milestone.
\end{itemize}

\section{Visualization of the learned latent goal space}\label{appendix:G}
In this section, we discuss about the ability of DTAMP to learn a latent goal space
that captures the dynamics of an environment.
We train the encoder along with the actor and critic, through goal-conditioned imitation learning.
By doing so, we make the encoder capture useful information from states,
to let the actor and critic distinguish the states and actions that can lead an agent to the goal.
There are also similar methods that have been studied through prior work,
such as learning forward or inverse dynamics model in a latent space \cite{srinivas2018upn, guo2022byolexplore}
or using contrastive learning based on state occupancy measures \cite{eysenbach2022contrastive}.
Figure \ref{fig:tsne} visualizes an example of latent goal space
learned by DTAMP using \texttt{kitchen-mixed-v0} data.
The figure shows that the trained encoder can capture the dynamics of environment,
and also let the milestone planner successfully predict milestones in the learned latent space.
\begin{figure}
    \centering
    \includegraphics[width=0.98\textwidth]{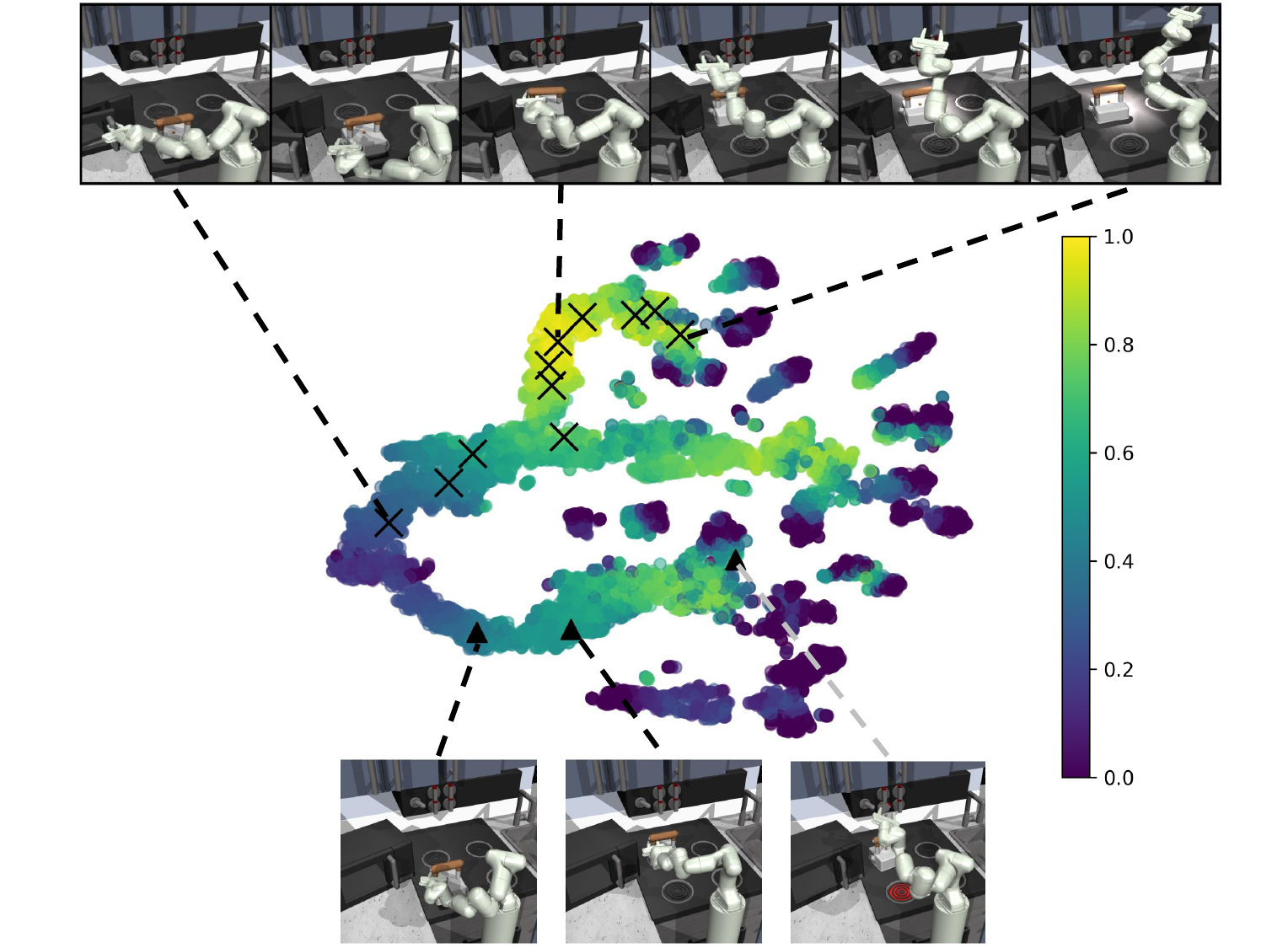}
    \caption{The figure shows t-SNE \cite{van2008visualizing} embedding of the learned latent space.
    Each data point represents a state in 
    \texttt{kitchen-mixed-v0} data.
    The black x's indicate milestones planned by DTAMP,
    and the black triangles indicate three states sampled from an arbitrary trajectory.
    Each image was created by visualizing the data point closest to the corresponding milestone.
    The colors represent state-values estimated using IQL.}
    \label{fig:tsne}
\end{figure}

\section{Additional experiments}\label{appendix:H}

\subsection{D4RL locomotion tasks}
In this experiment, we aim to verify that DTAMP can also deal with the general reinforcement learning problems in which the objectives of the tasks are specified by the reward functions.
To this end, we evaluate DTAMP on D4RL locomotion tasks (Halfcheetah, Hopper, and Walker2d).
In order to let the milestone planner to predict a trajectory that maximizes the sum of rewards, 
we made the classifier-free guidance method to be conditioned on sum of rewards as done in \citet{ajay2022dd}.
The comparative result shown in Table \ref{table:locomotion} shows that 
DTAMP achieves a marginally higher average score compared to the baselines,
which indicates that the proposed method also can be used for general reinforcement learning problems.
We would like to note that the D4RL locomotion tasks provide dense rewards and can be performed by planning relatively short trajectories compared to antmaze tasks (to predict more than 100 timesteps forward does not affect much on the performance on the locomotion tasks). This explains why DTAMP does not show a more significant performance improvement over Diffuser and DD in these tasks.

\begin{table}[t!]
  \caption{The table compares the performance of DTAMP evaluated on D4RL locomotion tasks with baselines.
  The result shows that DTAMP achieves the highest average score, which indicates that the proposed method is also able to handle general reinforcement learning problems as well.
  }
  \label{table:locomotion}
  \centering
  \begin{tabular}{llllllll}
    \toprule
    Environment               & CQL   & IQL   & DT     & TT    & Diffuser & DD   & \bf{DTAMP} \\
    \midrule
    halfcheetah-medium        & 42.6  & 47.4  & 42.6   & 46.9  & 44.2     & \textbf{49.1}    & 47.3 \\
    halfcheetah-medium-replay & 36.6  & 44.2  & 36.6   & 41.9  & 42.2     & 39.3    & \textbf{42.6} \\
    halfcheetah-medium-expert & 91.6  & 86.7  & 86.8   & \textbf{95.0}  & 79.8     & 90.6    & 88.2 \\
    \midrule
    hopper-medium             & 52.9  & 66.3  & 67.6   & 61.1  & 58.5     & 79.3    & \textbf{80.7} \\
    hopper-medium-replay      & 18.1  & 94.7  & 82.7   & 91.5  & 96.8     & \textbf{100.0}   & \textbf{100.0} \\
    hopper-medium-expert      & 52.5  & 91.5  & 107.6  & 110.0 & 107.2    & \textbf{111.8}   & 109.4 \\
    \midrule
    walker2d-medium           & 75.3  & 78.3  & 74.0   & 79.0  & 79.7     & 82.5    & \textbf{82.7} \\
    walker2d-medium-replay    & 26.0  & 73.9  & 66.6   & \textbf{82.6}  & 61.2     & 75.0    & 79.5 \\
    walker2d-medium-expert    & 107.5 & 109.6 & 108.1  & 101.9 & 108.4    & \textbf{108.8}   & 108.2 \\
    \midrule
    Average                   & 77.6  & 77.0  & 74.7   & 78.9  & 75.3     & 81.8    & \textbf{82.1} \\
    \bottomrule
  \end{tabular}
\end{table}

\subsection{Robustness against environment stochasticity}
In order to demonstrate the robustness against environment stocahsticity,
we further evaluate DTAMP on \texttt{maze2d-umaze-v1} environment
of the D4RL benchmark, while adding stochasticity to the transition model.
We also compare DTAMP against Diffuser \cite{janner2022planning},
which predicts future states and actions using a diffusion model,
and Decision Diffuser (DD) \cite{ajay2022dd},
which utilizes an inverse dynamics model to predict actions
from the states planned by a diffusion model.
In this experiment we let Diffuser and DD predict the whole trajectory
only once at the beginning of each rollout as done in DTAMP.

\textbf{Setting}~~
We create three different levels of stochasticity by choice of $p\in\{0,0.3,0.5\}$.
The environment either executes a random action $a\sim\text{Unif}(\mathcal{A})$
with probability $p$
or executes the action given by the agent with probability $(1-p)$.
Three sets of offline data were collected separately according to the choice of $p$,
and each set of data consists of one million transitions
collected by a waypoint controller.
We also note that the control frequency is reduced threefold by repeating the same action three times in this experiment.
By doing so, we reduce the time horizon of the environment (300 timesteps to 100 timesteps) to let Diffuser and DD predict the whole trajectory only once
at the beginning of each rollout.

\textbf{Result}~~
The normalized score of each model is shown in Table \ref{table:stochastic}.
The column named Ref. max score (reference max score)
shows the average score of the waypoint controller,
which represents the performance of an optimal agent.
We note that even if an agent predicts optimal actions,
stochasticity of environment causes inherent performance degradation.
In order to neglect this effect and only compare the robustness of each model,
the scores shown in Table \ref{table:stochastic} are normalized 
respect to the reference max scores 
(\ie $\text{normalized score}=100\times\frac{\text{average score}}{\text{ref. max score}}$).

The result shows that DTAMP is robust against the environment stochasticity and 
does not show performance degradation in the terms of normalized score.
On the other hand, Diffuser shows greater performance degradation as stochasticity increased.
DD shows no performance degradation when the stochasticity is small ($p=0.3$) 
while showing performance degradation when the stochasiticity is large ($p=0.5$).
The result indicates that our approach using a goal-conditioned actor 
makes DTAMP robust against environment stochasticity and
allows to plan milestones only once for each rollout,
while the other diffusion based planners do not.
\begin{table}[t]
\begin{tabular}{cccc||c}
    \toprule
    $p$        & Diffuser & DD & \textbf{DTAMP} & Ref. max score \\
    \midrule
    0.0        & 61.9     & 92.6              & \textbf{97.6}  & 80.5           \\
    0.3        & 29.6     & 97.0              & \textbf{99.0}  & 72.3           \\
    0.5        & 18.5     & 70.5              & \textbf{99.4}  & 64.3           \\
    \bottomrule
\end{tabular}
\caption{Normalized average score with different levels of stochasticity.}
\label{table:stochastic}
\end{table}

\end{document}